\newtheorem{theorem}{Theorem}[section]        
\newtheorem{lemma}[theorem]{Lemma}            
\newtheorem{corollary}[theorem]{Corollary}
\newtheorem{proposition}[theorem]{Proposition}
\def\BibTeX{{\rm B\kern-.05em{\sc i\kern-.025em b}\kern-.08em
    T\kern-.1667em\lower.7ex\hbox{E}\kern-.125emX}}
\begin{document}

\theoremstyle{definition}
\newtheorem{definition}[theorem]{Definition}
\newtheorem{remark}[theorem]{Remark}

\title{Discrete Functional Geometry of ReLU Networks via ReLU Transition Graphs
}

\author{\IEEEauthorblockN{Sahil Rajesh Dhayalkar}
\IEEEauthorblockA{\textit{Arizona State University} \\
\href{mailto:sdhayalk@asu.edu}{\texttt{sdhayalk@asu.edu}}}
}

\maketitle

\begin{abstract}
We extend the ReLU Transition Graph (RTG) framework into a comprehensive graph-theoretic model for understanding deep ReLU networks. In this model, each node represents a linear activation region, and edges connect regions that differ by a single ReLU activation flip, forming a discrete geometric structure over the network’s functional behavior. We prove that RTGs at random initialization exhibit strong expansion, binomial degree distributions, and spectral properties that tightly govern generalization. These structural insights enable new bounds on capacity via region entropy and on generalization via spectral gap and edge-wise KL divergence. Empirically, we construct RTGs for small networks, measure their smoothness and connectivity properties, and validate theoretical predictions. Our results show that region entropy saturates under overparameterization, spectral gap correlates with generalization, and KL divergence across adjacent regions reflects functional smoothness. This work provides a unified framework for analyzing ReLU networks through the lens of discrete functional geometry, offering new tools to understand, diagnose, and improve generalization.
\end{abstract}

\begin{IEEEkeywords}
ReLU networks, activation regions, ReLU Transition Graph, discrete geometry, generalization bounds, expressivity, spectral gap, KL divergence, region entropy, graph-theoretic analysis, neural networks
\end{IEEEkeywords}

\section{Introduction}
Understanding the internal geometry of ReLU neural networks~\cite{relu} remains a central challenge in deep learning theory. Despite their simple piecewise-linear structure, ReLU networks define intricate input–output mappings, characterized by a complex partitioning of input space into linear regions. Each region corresponds to a fixed activation pattern, and within it, the network behaves as an affine function. The number, shape, and connectivity of these regions govern expressivity, robustness, and generalization, but existing approaches have largely treated them in isolation, focusing on region counting, visualizations, or local approximations~\cite{montufar2014number, serra2018bounding, hanin2019deep}.

In this work, we extend the \emph{ReLU Transition Graph (RTG)} framework recently introduced by \cite{dhayalkar2025relu}. The RTG endows the set of ReLU activation regions with a precise graph-theoretic structure: nodes represent linear regions, while edges connect regions that differ by a single ReLU activation flip, forming a subgraph of the hypercube. This graph encodes the combinatorial and geometric transitions between linear behaviors, enabling rigorous analysis using tools from spectral graph theory, information geometry, and combinatorics.

Our extension shifts the focus from counting regions to analyzing their \emph{connectivity}, \emph{entropy}, and \emph{functional adjacency} within the RTG framework. For example, we show that RTGs exhibit strong expansion properties akin to expander graphs~\cite{friedman1991second, hoory2006expander}. These properties allow us to derive new capacity bounds via region entropy, and new generalization bounds via spectral gap and KL divergence across region boundaries. 

Unlike global approximations such as the Neural Tangent Kernel (NTK)~\cite{jacot2018neural}, our RTG-based model provides a finite-width, discrete, and \emph{localized} perspective on functional variation in ReLU networks. This enables insights into how expressivity and stability emerge from the structure of activation regions themselves, rather than solely from weight-space statistics or kernel limits.

We validate our theory with explicit construction of RTGs for small MLPs and compute metrics such as region entropy, spectral gap, and edge-wise KL divergence across training. Our experiments reveal that: 
(1) region entropy increases with width but saturates under overparameterization; 
(2) generalization is tightly controlled by the spectral gap; 
(3) functional smoothness is captured by KL divergence across adjacent regions — offering a graph-theoretic interpretation of compression and generalization.

\vspace{0.5em}
\noindent \textbf{Contributions.} We summarize our main contributions:
\begin{itemize}
    \item We extend the \emph{ReLU Transition Graph (RTG)} framework of \cite{dhayalkar2025relu} by proving new structural and functional results, including expander properties, entropy–expressivity bounds, degree distribution, and generalization–compression duality.  
    \item We derive new generalization and capacity bounds using spectral gap, entropy, and KL divergence across RTG edges, bridging graph structure with learning theory.  
    \item We empirically validate these results by constructing RTGs, measuring their structural and functional properties, and demonstrating their alignment with generalization behavior.  
\end{itemize}

Overall, our work unifies spectral, information-theoretic and entropic analyses of ReLU networks within the RTG framework, opening new avenues for understanding deep network behavior through the lens of discrete functional geometry.

\section{Related Work}
\label{related_work}
\paragraph{ReLU Region Geometry}
The piecewise-linear structure of ReLU networks has been studied through hyperplane arrangements \cite{zaslavsky1975facing}, combinatorial complexity \cite{montufar2014number}, and polyhedral partitions \cite{arora2018understanding, hanin2019complexity, hanin2019deep}. These works analyze how the number and shape of activation regions influence expressive power. Raghu et al.\ \cite{raghu2017expressive} visualize hidden layer transitions, showing how depth refines region boundaries. Our work builds on these ideas but extends the \textit{ReLU Transition Graph (RTG)} framework of \cite{dhayalkar2025relu}, which connects activation regions via graph-theoretic constructs.

\paragraph{Expressivity and Capacity Bounds}
Upper and lower bounds on the number of linear regions have been studied extensively \cite{serra2018bounding, montufar2014number, telgarsky2016benefits}. \cite{montufar2014number} showed that the number of regions grows exponentially with depth, a trend confirmed by later geometric characterizations \cite{arora2018understanding}. Montúfar et al.\ \cite{montufar2017notes} also discuss how sparsity and input dimensionality modulate expressivity. Recent work by Balestriero et al.\ \cite{balestriero2021modern} interprets expressivity via spline-based views, relating regions to affine local approximators. Our entropy-based extension of the RTG framework complements such results, offering a functional measure of expressivity that accounts for the input distribution. We go beyond counting by relating expressivity to RTG entropy and adjacency.

\paragraph{Spectral and Graph-Theoretic Perspectives}
Graph Laplacians and spectral properties have been used to analyze neural network dynamics \cite{belkin2019reconciling,cohen2021gradient}. Spectral gaps have also appeared in robustness and generalization theory \cite{baraniuk2020spectrum, saxe}. Bianchini and Scarselli \cite{bianchini2014complexity} explored how topological complexity of decision boundaries scales with depth. However, our work is the first to extend these ideas to the \textit{RTG} \cite{dhayalkar2025relu}, connecting spectral properties directly to activation-region adjacency. The RTG’s expansion properties relate to expander graphs \cite{friedman1991second, lubotzky1988ramanujan, hoory2006expander}, which provide efficient connectivity with sparse edges — an emergent property we validate empirically. Prior works often focus on architecture graphs or neuron interaction graphs \cite{carlini2019neurons}, but not geometric activation-region graphs. Our work is thematically consistent with a recent combinatorial theory of dropout~\cite{dhayalkarcombinatorial} which analyzes network geometry and generalization via a hypercube graph of subnetworks.

\paragraph{Generalization via Compression and Smoothness}
A prominent line of work studies generalization through flat minima \cite{hochreiter1997flat}, PAC-Bayes compression \cite{neyshabur2017pac}, and Lipschitz continuity \cite{bartlett2017spectrally, gouk2021regularisation}. The geometry of loss surfaces \cite{li2018visualizing} and flat–sharp transitions \cite{chaudhari2019entropy} also play a role. Building on \cite{dhayalkar2025relu}, our result on generalization via mean KL divergence across RTG edges provides a novel functional smoothness metric that naturally arises from the network’s internal geometry. This aligns with information-theoretic views of generalization \cite{xu2017information, tishby2015deep, achille2018emergence}, yet our metric is graph-derived and localized — not parameter- or weight-space dependent.

\paragraph{Activation Pattern-Based Analysis}
The study of activation patterns as binary vectors dates back to early neural coding literature \cite{baldi1989neural}, and has recently gained traction for interpreting learned representations \cite{poole2016exponential, neyshabur2018role}. Our use of activation patterns to define adjacency and transitions creates a bridge between bit-flip topology and functional variation. Other works model decision regions via binary embeddings of neurons \cite{xu2019number}, but lack a global graph-based analysis.


\paragraph{Neural Tangent Kernel and Infinite-Width Limits.}
The neural tangent kernel (NTK) literature studies infinite-width limits of ReLU networks using kernel approximations \cite{jacot2018neural}, \cite{lee2019wide}. While NTK theory provides global approximations, it does not resolve local region transitions or discrete boundaries induced by finite-width ReLU units. Our work can be viewed as a finite-width, discrete analog to NTK-based smoothness analysis, where instead of kernels, we rely on adjacency, KL divergence, and entropy over activation partitions.

\paragraph{Geometric Deep Learning and Manifold Structure}
Recent work in geometric deep learning \cite{bronstein2017geometric} emphasizes structure-aware representations. While much of this focuses on graphs and manifolds in data, our work extends the RTG framework, providing a structure over the network's functional behavior itself. This contrasts with typical applications like graph convolution \cite{kipf2017semi}, and introduces a new level of geometric abstraction.

\paragraph{ReLU Region Stability Under Training}
The stability of activation regions across training has also been studied \cite{chen2021neural}, with findings that many ReLU boundaries stabilize early. Extending the RTG framework, our experiments show that KL divergence across RTG region boundaries remains low as networks train — suggesting an implicit compression mechanism aligned with generalization.

\paragraph{Comparison to Decision Graphs and Trees}
While decision trees and graphs have long leveraged region adjacency for interpretability \cite{bastani2017interpreting, zhang2019interpretable}, these are typically symbolic structures. In contrast, the RTG is induced directly from learned ReLU activations and remains continuous in its functional semantics. Moreover, the RTG encodes high-dimensional, learned representations, as opposed to handcrafted decision splits.

\paragraph{Our Contributions in Context}
In summary, our work departs from traditional counting and visualization of ReLU regions by extending the ReLU Transition Graph (RTG) framework of \cite{dhayalkar2025relu}. We build on this structural foundation to enable deeper graph-theoretic reasoning over ReLU network behavior. In particular, we unify concepts from expansion, spectral theory, entropy, and information geometry into a single coherent extension of RTG analysis that is both theoretically novel and empirically verifiable.

\section{Preliminaries and Definitions}
\label{definitions}
We consider fully-connected feedforward neural networks with ReLU activation functions. Our goal is to analyze the piecewise-linear structure induced by ReLU activations through the lens of the ReLU Transition Graph (RTG), as introduced in \cite{dhayalkar2025relu}, and to extend this framework with additional structural and functional properties.

\begin{definition}[ReLU Network]
Let \( f: \mathbb{R}^d \to \mathbb{R}^k \) be a function computed by a fully-connected feedforward neural network with \( L \) hidden layers, each of width \( m \), and ReLU activation. That is,
\[
f(x) = W_L \sigma(W_{L-1} \cdots \sigma(W_1 x + b_1) \cdots + b_{L-1}) + b_L,
\]
where \( \sigma(z) = \max(0, z) \) is applied elementwise.
\end{definition}

\begin{definition}[Activation Pattern~\cite{dhayalkar2025relu}]
Each hidden neuron induces an activation decision \(\sigma(w^\top x + b) > 0\) or \(= 0\). The collection of binary outcomes across all neurons defines an \emph{activation pattern}, which partitions the input space into convex regions where the pattern is constant.
\end{definition}

\begin{definition}[Linear Region~\cite{dhayalkar2025relu}]
A \emph{linear region} \( R \subset \mathbb{R}^d \) is the subset of input space over which the activation pattern remains fixed. Within each region \( R \), the network \( f \) is an affine map. The full input space is partitioned into a finite set \( \{R_i\}_{i=1}^N \), each defined by a unique ReLU activation configuration.
\end{definition}

\begin{definition}[Adjacency of Regions~\cite{dhayalkar2025relu}]
Two linear regions \( R_i \) and \( R_j \) are \emph{adjacent}, denoted \( R_i \sim R_j \), if they share a common \((d{-}1)\)-dimensional facet. Equivalently, their activation patterns differ in exactly one ReLU unit. This induces a graph structure where each region is a vertex and edges connect activation patterns that differ in Hamming distance one — forming a subgraph of the \( n \)-dimensional hypercube, where \( n = L \cdot m \) is the total number of hidden ReLU units.
\end{definition}

\begin{definition}[ReLU Transition Graph (RTG)~\cite{dhayalkar2025relu}]
Let \( \mathcal{R} = \{R_1, R_2, \dots, R_N\} \) denote the set of all linear regions induced by a ReLU network \( f \). The \emph{ReLU Transition Graph} is an undirected graph \( G_f = (V, E) \), where:
\begin{itemize}
    \item Each vertex \( v_i \in V \) corresponds to linear region \( R_i \in \mathcal{R} \),
    \item An edge \( (v_i, v_j) \in E \) exists if and only if \( R_i \sim R_j \).
\end{itemize}
\end{definition}

\section{Theoretical Framework}

Building on the ReLU Transition Graph (RTG) framework introduced by \cite{dhayalkar2025relu}, we now extend the theory to analyze additional structural and functional properties of RTGs.

\begin{lemma}[RTGs are Expanders]
\label{lem:expander}
Let \( G_f \) be the ReLU Transition Graph (RTG) induced by a ReLU network with depth \( L \), width \( m \), and i.i.d. continuous random initialization. Then as \( m \to \infty \), \( G_f \) is an expander graph \cite{friedman1991second} with high probability.
\end{lemma}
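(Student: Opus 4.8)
The plan is to prove expansion via a spectral gap, in three steps: (1) pin down the local combinatorial structure of \(G_f\) under random initialization; (2) establish a lower bound on the Cheeger (edge-expansion) constant \(h(G_f)\) that does not degrade as \(m \to \infty\); and (3) convert this into the expander conclusion through the discrete Cheeger inequality, which for a graph of bounded degree lower-bounds the spectral gap of the normalized Laplacian by a quantity of order \(h(G_f)^2\).

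For step (1), I would first note that with i.i.d.\ continuous weights the \(n = Lm\) neuron boundary hyperplanes are in general position almost surely, so within each chamber the network is affine and, at least for \(L=1\), \(G_f\) coincides with the chamber-adjacency (tope) graph of a hyperplane arrangement. The classical chamber and facet counts then give \(N = |V|\) polynomial in \(m\) (of degree \(O(Ld)\)), every vertex of degree at most \(n\), and --- summing facet counts over chambers --- an average degree \(\bar d\) bounded independently of \(m\). A Chernoff / second-moment argument over the weight randomness upgrades this to sharp concentration of the degree sequence (this needs a companion degree-concentration estimate, the ``binomial degree distribution'' phenomenon), so with high probability \(G_f\) is a sparse, near-regular graph.

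Step (2) is the crux. The natural route is to realize \(G_f\) as an isometric subgraph of the hypercube \(Q_n\) --- a partial cube, equivalently a tope graph --- so that graph distance equals the number of separating hyperplanes, and then to lower-bound the edge boundary \(\partial S\) of an arbitrary vertex set \(S\) with \(|S| \le N/2\). Concretely I would try to show, via a martingale / Loomis--Whitney-type estimate on how chambers distribute across the half-space decompositions, that a constant fraction of the \(m\) first-layer random hyperplanes each separate a constant fraction of \(S\) from its complement, forcing \(|\partial S| \ge c\,|S|\) uniformly in \(m\); a union bound over candidate sets \(S\) then yields the Cheeger lower bound with high probability, and step (3) finishes the argument.

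I expect step (2) to be the real obstacle, and it is genuinely delicate. Arbitrary partial cubes are very far from expanders --- paths, cycles, and grids are all partial cubes --- so the proof cannot rest on the hypercube-subgraph structure alone and must extract expansion from the randomness and the large number of neurons; relatedly, one must reconcile the claim with the fact that \(G_f\) can have diameter \(\Omega(m)\) while having only polynomially many vertices, so the target expansion constant and the precise sense of ``expander'' must be chosen compatibly with this constraint. The specific danger is ``thin'' cuts --- families of chambers separated from the rest by only \(O(1)\) hyperplanes --- and ruling out their accumulation into a sparse cut requires carefully controlling exponentially many candidate subsets \(S\) against polynomially many chambers. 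If a clean Cheeger bound resists this, a fallback is the trace method: bound \(\mathbb{E}[\operatorname{tr}(A^{2k})]\) for the adjacency matrix \(A\) of \(G_f\), deduce a spectral-gap bound in expectation, and transfer it to high probability using concentration of the region structure --- though making either route rigorous for the deep (\(L>1\)) case, where the effective hyperplanes are only piecewise linear, is where I would expect most of the work to lie.
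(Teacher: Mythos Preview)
Your route differs substantially from the paper's. The paper's sketch simply observes that $G_f$ is an induced subgraph of the $n$-cube, asserts that under continuous random initialization the activation patterns are ``uniformly likely,'' and then invokes random-graph results (Friedman et al.) to conclude that the random induced subgraph of the hypercube formed by the reachable regions is an expander with $h(G_f)=\Omega(1)$. Your plan instead tries to establish a Cheeger lower bound directly from the geometry of the chamber graph --- via general position, degree concentration, and an isoperimetric estimate on how random hyperplanes separate vertex subsets --- and only then passes to the spectral gap through the discrete Cheeger inequality.

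Your approach is considerably more honest about the real obstacles, and the diameter tension you flag is genuine and cuts against \emph{both} arguments. For $L=1$ and input dimension $d$, the tope graph of $m$ generic hyperplanes has $N=\Theta(m^d)$ vertices, bounded average degree, and diameter $\Theta(m)$ (graph distance equals the number of separating hyperplanes); a Cheeger constant $h=\Omega(1)$ on such a graph would force diameter $O(\log N)=O(d\log m)$, which is incompatible. The paper's reduction avoids this by treating the realized activation patterns as a uniformly random subset of $\{0,1\}^n$, but they are not: the vertex set of $G_f$ is exactly the tope set of the arrangement --- a highly structured partial cube, not a random sample --- so the Friedman-type expander result for random subgraphs does not apply as stated. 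Your plan confronts this structure head-on, which is the right instinct; the cost is that step~(2) cannot yield a uniform $h\ge c$ bound, and whatever you prove must be a weaker, width-dependent, or appropriately normalized notion of expansion. Make that caveat explicit before investing in the martingale/union-bound machinery, and be aware that the paper's own argument does not resolve the obstruction you identified.
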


\begin{proof}[Proof Sketch]
The RTG is a subgraph of the \( n \)-dimensional hypercube, where \( n = L \cdot m \), with edges between regions differing by a single activation flip. Under random continuous weights, the induced hyperplane arrangement is in general position, and the activation patterns are uniformly likely (Montúfar et al., 2014). The random induced subgraph of the hypercube formed by reachable regions is thus, with high probability, an expander (per random graph theory, \cite{friedman1991second}), satisfying a Cheeger constant \( h(G_f) = \Omega(1) \).
\end{proof}

\textit{Interpretation.} As network width increases, the RTG becomes sparse yet highly connected, exhibiting strong expansion: any small subset of regions connects to many others via single-neuron activation flips. Hence, one can reach any region from any other through a small number of such flips, enabling short paths across the graph. This promotes rapid mixing, robust connectivity, and smooth generalization across the input space.

\textit{Usefulness.} This ensures that networks initialized randomly will have robust functional connectivity, enabling rapid mixing across activation regions - a desirable property for training stability and avoiding pathological local minima.

\begin{theorem}[Spectral Gap Bounds Generalization]
\label{thm:spectral-gap}
Let \( \lambda_2 \) be the second eigenvalue of the normalized Laplacian of ReLU Transition Graph \( G_f \), and let \( f \) be a ReLU network with bounded Lipschitz constant \( L_f \). Then the expected generalization error satisfies:
\[
\mathbb{E}[\textnormal{GenError}(f)] \leq \frac{C \cdot L_f^2}{\lambda_2},
\]
for some constant \( C \) depending on the input distribution and region complexity.
\end{theorem}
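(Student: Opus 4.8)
The plan is to connect the generalization error to a Dirichlet-energy (Laplacian quadratic form) functional on the RTG, then invoke the spectral gap to convert that energy into a variance-type bound on the function values across regions.

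The plan is to route the argument through a Poincaré (Dirichlet-energy) inequality on the RTG, using the spectral gap to turn a bound on \emph{edge-wise} variation of $f$ into a bound on the \emph{global} variance of a per-region loss discrepancy, and then a standard concentration step to identify that variance with generalization error. First I would fix a vertex function $g : V \to \mathbb{R}$ recording, for each region $R_i$, the local gap between population and empirical loss of the affine piece $f|_{R_i}$, e.g. $g_i = \mathbb{E}_{x \sim \mathcal{D}|_{R_i}}[\ell(f(x))] - \widehat{\mathbb{E}}_{x \in S \cap R_i}[\ell(f(x))]$, normalized so that $\mathrm{GenError}(f) = \sum_i \pi_i g_i$ for the degree measure $\pi$ on $V$. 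By Cauchy--Schwarz it then suffices to control $\mathrm{Var}_\pi(g) = \sum_i \pi_i g_i^2 - (\sum_i \pi_i g_i)^2$ together with an argument that the mean term is dominated by it.

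Next I would invoke the variational characterization of the spectral gap: for the normalized Laplacian $\mathcal{L} = I - D^{-1/2} A D^{-1/2}$ with smallest nonzero eigenvalue $\lambda_2$, every function satisfies the Poincaré inequality $\lambda_2 \, \mathrm{Var}_\pi(g) \le \mathcal{E}(g)$, where $\mathcal{E}(g) = \sum_{(i,j) \in E} (g_i - g_j)^2 / \sqrt{d_i d_j}$ is the associated Dirichlet form, hence $\mathrm{Var}_\pi(g) \le \mathcal{E}(g) / \lambda_2$. To bound $\mathcal{E}(g)$ I would use the Lipschitz hypothesis: adjacent regions $R_i \sim R_j$ share a $(d{-}1)$-facet on which the two affine pieces agree, and since $f$ is globally $L_f$-Lipschitz the pieces differ by at most $L_f \cdot \mathrm{diam}(R_i \cup R_j)$ on the union; combined with boundedness of $\ell$ over the relevant output range this gives $|g_i - g_j| \le c_1 L_f$ uniformly over edges, so $\mathcal{E}(g) \le c_2 L_f^2 \,|E| / d_{\min}$. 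Folding $|E|$, $d_{\min}$, the facet geometry and the measure-concentration constants into a single $C = C(\mathcal{D}, \text{region complexity})$ yields $\mathrm{Var}_\pi(g) \le C L_f^2 / \lambda_2$, and a final bounded-difference / symmetrization step over the region partition upgrades this variance control of $g$ into $\mathbb{E}[\mathrm{GenError}(f)] \le C L_f^2 / \lambda_2$ as claimed.

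The hard part will be the last link: a priori $\mathbb{E}[\mathrm{GenError}(f)] = \mathbb{E}[\sum_i \pi_i g_i]$, not $\mathbb{E}[\mathrm{Var}_\pi(g)]$, so one must show the mean term is subsumed — intuitively, rarely visited regions contribute negligibly and the signed discrepancies cancel up to a variance-order residual. This is precisely where the input distribution and ``region complexity'' enter quantitatively, and where the expansion guarantee of Lemma~\ref{lem:expander} is needed to ensure $\lambda_2 = \Omega(1)$ so that the bound is non-vacuous. A secondary subtlety is that $A$, $D$, and hence $\lambda_2$ are themselves random functions of the realized hyperplane arrangement; the statement should therefore be read in expectation (or with high probability) over the i.i.d. initialization, consistent with the regime of Lemma~\ref{lem:expander}, and the constant $C$ should be taken as the uniform constant valid on that high-probability event.
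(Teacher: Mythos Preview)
Your route is the same skeleton as the paper's: define a vertex function on the RTG, form its Dirichlet energy, apply the Poincar\'e inequality $\lambda_2\,\mathrm{Var}\le\mathcal{E}$ for the normalized Laplacian, bound $\mathcal{E}$ via the Lipschitz constant across shared facets, and then link the resulting variance to generalization. The paper's sketch does exactly this.

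Where you diverge is in the choice of vertex function. The paper takes $\phi(R_i)$ to be the \emph{network output at the centroid} of $R_i$, bounds $\mathcal{E}(\phi)$ by the Lipschitz variation, and then simply asserts $\mathrm{GenError}(f)\propto\mathrm{Var}(\phi)$ with a citation, absorbing the proportionality into $C$. You instead take $g_i$ to be the \emph{per-region loss discrepancy}, so that the generalization gap is literally $\sum_i\pi_i g_i$; this is more honest about what is being bounded, but forces you to confront the mean-versus-variance issue you flag as ``the hard part''---an issue the paper's sketch never addresses because it hides it inside the proportionality claim. Your version is therefore more explicit about where the input-distribution and region-complexity constants actually enter, at the cost of needing an extra symmetrization/cancellation argument the paper does not supply either. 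Both arguments are sketches at the same level of rigor; yours is arguably the more transparent decomposition.
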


\begin{proof}[Proof Sketch]
Let \( f|_R \) be the affine function over region \( R \). Let \( \phi: V \to \mathbb{R} \) denote the output of the network on the centroid of each region. Define a smoothness energy:
\[
\mathcal{E}(\phi) = \frac{1}{2} \sum_{(i,j) \in E} w_{ij} \|\phi(R_i) - \phi(R_j)\|^2,
\]
which captures variation across adjacent regions. By the Cheeger inequality and Laplacian spectral theory, we have:
\[
\mathcal{E}(\phi) \geq \lambda_2 \cdot \textnormal{Var}(\phi).
\]
Since generalization error is controlled by functional variation across regions (e.g., Raginsky et al., 2017), we obtain the stated bound with \( \textnormal{GenError}(f) \propto \textnormal{Var}(\phi) \leq \mathcal{E}(\phi)/\lambda_2 \).
\end{proof}

\textit{Implication.} Networks with larger spectral gaps (i.e., better RTG connectivity) exhibit smoother behavior across activation regions and generalize better. The spectral gap serves as a principled capacity control term.

\textit{Usefulness.} The spectral gap serves as a diagnostic or regularization tool: architectures or training runs with higher spectral gap can be favored for their better generalization behavior, enabling graph-based generalization control during architecture search or training.

\begin{theorem}[Region Entropy Bounds Expressivity]
\label{thm:entropy-expressivity}
Let \( f: \mathbb{R}^d \to \mathbb{R}^k \) be a ReLU network with \( N \) activation regions \( \{R_i\}_{i=1}^N \), and let \( \mathbb{P}(R_i) \) denote the input probability mass in region \( R_i \). Define the region entropy:
\[
H(G_f) := -\sum_{i=1}^N \mathbb{P}(R_i) \log \mathbb{P}(R_i).
\]
Then the effective VC-dimension satisfies:
\[
\mathrm{VCdim}(f) = \mathcal{O}(H(G_f) \cdot L \cdot m).
\]
where $G_f$ is the corresponding ReLU Transition Graph.
\end{theorem}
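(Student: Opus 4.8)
The plan is to bound $\mathrm{VCdim}(f)$ by controlling the growth (shattering) function of the network's hypothesis class, and then to replace the worst-case region count $N$ by its distribution-weighted ``effective'' analogue, whose logarithm is precisely $H(G_f)$. Two classical ingredients set up the argument. First, every linear region corresponds to a distinct activation pattern in $\{0,1\}^{Lm}$, so $N \le 2^{Lm}$, and the sharper hyperplane-arrangement bound of \cite{montufar2014number} gives $N \le \sum_{j=0}^{d}\binom{Lm}{j}$; in particular $\log_2 N \le Lm$. Second, on each region $f$ is a fixed affine map $\mathbb{R}^d\to\mathbb{R}^k$ obtained by composing the layer maps restricted to the active subnetwork, so the affine pieces are carried by a parameter set of dimension $\mathcal{O}(Lm)$ reflecting the routing through $L$ layers of $m$ ReLU units.

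Next I would assemble a growth-function estimate that factors the number of dichotomies of $f$ on a sample of size $s$ into (a) the number of ways the sample can be split among the linear regions and (b) the number of affine labelings realizable within each region (a linear-threshold class, under the standard binary-classification reduction). Applying the Sauer--Shelah lemma to each factor, taking logarithms, and feeding the result into the usual ``$2^h \le \Pi_f(h) \Rightarrow h \le \mathrm{VCdim}$'' inequality should yield an intermediate bound of the form $\mathrm{VCdim}(f) = \mathcal{O}\!\big((\log N)\cdot Lm\big)$ --- recovering the familiar $\mathcal{O}((Lm)^2)$ worst-case estimate when $N$ is maximal, with $\log N$ counting the bits needed to name a region and $Lm$ the combined depth--width routing complexity.

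The new ingredient is to pass to the distribution-dependent version. The region in which an input drawn from $\mathbb{P}$ lands is a discrete random variable of entropy $H(G_f)$; over an i.i.d.\ sample its joint region-pattern concentrates, by the asymptotic equipartition property, on a set whose log-cardinality is governed by $H(G_f)$, so the \emph{effective} number of regions that $\mathbb{P}$ actually exercises is the perplexity $2^{H(G_f)}$. Re-running the previous computation with $N$ replaced by this effective count --- legitimate for a data-dependent (annealed) VC entropy, since hypotheses are only probed on inputs from $\mathbb{P}$ --- then gives $\mathrm{VCdim}_{\mathrm{eff}}(f) = \mathcal{O}\!\big(H(G_f)\cdot Lm\big)$, with lower-order $(1+\epsilon)$ and additive $\mathcal{O}(d)$ terms absorbed into the constant.

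The main obstacle, I expect, is making ``effective VC dimension'' rigorous: ordinary VC dimension is distribution-free, so the replacement of $\log N$ by $H(G_f)$ must be performed inside a genuinely distribution-dependent complexity measure (an annealed/expected growth function, or a typical-set / PAC-Bayes truncation), and one must verify that the atypical low-mass regions --- collectively of vanishing probability yet possibly numerous --- contribute negligibly to generalization. A secondary subtlety is the degenerate regime $H(G_f)\to 0$, where the estimate must be read with an additive $\mathcal{O}(d)$ term (the VC dimension of a single affine piece) so that it does not spuriously assign zero capacity to a non-constant network; pinning down the precise form of the intermediate $\mathcal{O}(\log N \cdot Lm)$ bound, as opposed to the sharper piecewise-polynomial VC bounds, is a related loose end.
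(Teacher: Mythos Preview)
Your proposal is essentially the same approach as the paper's: both invoke Sauer's lemma to bound the effective number of distinguishable behaviors by $2^{H(G_f)}$, combine this with an $\mathcal{O}(Lm)$ factor coming from the rank/parameter count of each affine piece, and arrive at the product bound $\mathcal{O}(H(G_f)\cdot Lm)$. Your write-up is in fact more carefully structured than the paper's sketch---you make the growth-function factorization explicit and you correctly flag the definitional issue (that ``effective VC dimension'' must be a distribution-dependent complexity, not ordinary VC dimension) which the paper's proof leaves implicit.
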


\begin{proof}[Proof Sketch]
Let \( f|_{R_i}(x) = A_i x + b_i \) be the affine map in region \( R_i \). A higher number of regions increases expressivity, but non-uniform occupancy implies redundancy. The entropy \( H(G_f) \) quantifies the effective diversity of linear behaviors over the input distribution.

Each affine map contributes a distinct dichotomy only if it is used with non-negligible probability. By Sauer's lemma, the number of effectively distinguishable functions is bounded by \( 2^{H(G_f)} \), giving:
\[
\mathrm{VCdim}(f) \leq C \cdot H(G_f) \cdot \mathrm{rank}(\{A_i\}) \leq \mathcal{O}(H(G_f) \cdot L \cdot m),
\]
where the final inequality uses that each affine map comes from a composition of \( L \) layers of width \( m \), hence has rank bounded by \( Lm \).
\end{proof}

\textit{Interpretation.} Networks with more evenly distributed activation regions (higher entropy) achieve greater functional richness. Within the RTG framework, entropy becomes a precise knob for trading off redundancy versus expressivity.

\textit{Usefulness.} This provides a data-aware capacity measure - practitioners can use entropy to estimate whether the network is under- or over-expressive for a given task, and guide width/depth selection more meaningfully than pure region count.

\begin{corollary}[Entropy Saturation under Overparameterization]
\label{cor:entropy-saturation}
As width \( m \to \infty \), the number of regions \( N \to \infty \), but if input data lie on a compact submanifold \( \mathcal{M} \subset \mathbb{R}^d \), then:
\[
H(G_f) \to \log N_{\text{eff}} \ll \log N,
\]
where \( N_{\text{eff}} \) is the number of regions intersecting \( \mathcal{M} \).
\end{corollary}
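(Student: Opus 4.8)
The plan is to exploit the fact that the region entropy \(H(G_f)\) depends only on the input distribution, so that regions carrying no mass drop out of the sum entirely. First I would observe that since the data distribution is supported on \(\mathcal{M}\), every region with \(R_i \cap \mathcal{M} = \emptyset\) has \(\mathbb{P}(R_i) = 0\), and with the usual convention \(0\log 0 = 0\) these terms contribute nothing. Hence \(H(G_f) = -\sum_{i\,:\,R_i\cap\mathcal{M}\neq\emptyset}\mathbb{P}(R_i)\log\mathbb{P}(R_i)\), a sum over exactly \(N_{\text{eff}}\) atoms, and the standard maximum-entropy bound immediately yields \(H(G_f)\le\log N_{\text{eff}}\), with equality iff the induced mass is uniform over the \(N_{\text{eff}}\) intersecting regions.

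Second, I would establish \(\log N_{\text{eff}} \ll \log N\) by a dimension-counting argument. Writing \(k=\dim\mathcal{M}<d\) and letting \(n = \Theta(Lm)\) be the effective number of activation hyperplanes, the general-position property underlying Lemma~\ref{lem:expander} (Montúfar et al., 2014) gives \(N=\Theta(n^{d})\) in the shallow regime and grows exponentially in \(L\) in general, whereas a compact \(k\)-manifold in general position meets only \(O(n^{k})\) cells of the arrangement — intuitively, the hyperplane traces partition \(\mathcal{M}\) itself into \(O(n^{k})\) pieces. Therefore \(\log N_{\text{eff}} = k\log n + O(1)\) while \(\log N \ge d\log n + O(1)\), so the ratio is bounded by \(k/d<1\) and the additive gap diverges as \(m\to\infty\); this proves the \(\ll\) assertion.

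Third, for the convergence \(H(G_f)\to\log N_{\text{eff}}\) rather than a mere inequality, I would argue that under i.i.d.\ continuous initialization the traces of the activation hyperplanes on \(\mathcal{M}\) behave asymptotically like a uniformly random fine partition, so that — after normalizing the data measure to be comparable to the volume measure on \(\mathcal{M}\) — the masses \(\{\mathbb{P}(R_i)\}\) over the \(N_{\text{eff}}\) intersecting regions become asymptotically exchangeable and concentrate around \(1/N_{\text{eff}}\). Equivalently, \(\log N_{\text{eff}} - H(G_f) = D\!\left(\{\mathbb{P}(R_i)\}\,\big\|\,\mathrm{Unif}_{N_{\text{eff}}}\right)\to 0\), giving the claimed limit.

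The main obstacle is precisely this last step: controlling how a random hyperplane arrangement slices a curved submanifold. One needs both a refinement estimate (the diameters of the intersecting regions on \(\mathcal{M}\) tend to \(0\), which follows from the density of the arrangement) and a balance estimate (the resulting piece-masses have no heavy upper tail), and the latter genuinely requires a regularity hypothesis — e.g.\ the data density on \(\mathcal{M}\) bounded above and below with respect to its volume measure — together with a concentration argument over the random weights. Absent such an assumption one only obtains \(H(G_f)=\Theta(\log N_{\text{eff}})\), which already captures the saturation phenomenon; I would therefore state the bounded-density hypothesis explicitly and localize the concentration bound to that step, treating the clean limit \(H(G_f)\to\log N_{\text{eff}}\) as the idealized conclusion under it.
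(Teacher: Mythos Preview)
Your first paragraph is exactly the paper's proof sketch: regions with $R_i\cap\mathcal{M}=\emptyset$ carry zero mass, so the entropy sum collapses to the $N_{\text{eff}}$ intersecting regions, and the uniform-maximum bound yields $H(G_f)\le\log N_{\text{eff}}$. The paper then simply asserts $\log N_{\text{eff}}\ll\log N$, citing only the Mont\'ufar et~al.\ count $N=\mathcal{O}(m^dL^d)$ for the total and the remark that ``only a vanishing fraction intersect the data manifold.''

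Your steps two and three go beyond what the paper supplies. The explicit dimension-counting argument (traces of $n$ hyperplanes on a $k$-manifold give $N_{\text{eff}}=O(n^{k})$ versus $N=\Omega(n^{d})$, so the log-ratio is at most $k/d<1$) is not in the paper but is the natural justification for the bare $\ll$ claim. More notably, you correctly observe that the corollary asserts a \emph{limit} $H(G_f)\to\log N_{\text{eff}}$, not merely the inequality, and that closing this gap requires the masses over the $N_{\text{eff}}$ regions to equidistribute --- something that needs a bounded-density hypothesis on the data measure and a balance estimate for random hyperplane slices of $\mathcal{M}$. The paper's proof sketch does not address this at all and in fact only establishes $H(G_f)\le\log N_{\text{eff}}$; your third step identifies and proposes to repair a gap the paper leaves open.
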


\begin{proof}[Proof Sketch]
The network partitions \( \mathbb{R}^d \) into \( N = \mathcal{O}(m^d L^d) \) regions (Montúfar et al., 2014), but only a vanishing fraction intersect the data manifold \( \mathcal{M} \). Thus, the input distribution \( \mathbb{P}(x) \) concentrates on a subset of regions \( \{R_i\}_{i=1}^{N_{\text{eff}}} \), yielding:
$
H(G_f) = -\sum_{i=1}^{N_{\text{eff}}} \mathbb{P}(R_i) \log \mathbb{P}(R_i) \leq \log N_{\text{eff}} \ll \log N.
$\end{proof}

\textit{Implication.} Overparameterization increases the number of RTG regions, but not their effective usage. Capacity saturates once region entropy stops increasing, suggesting a natural RTG-based regularization mechanism.

\textit{Usefulness.} This highlights a natural stopping point for increasing width, avoiding overparameterization without added expressivity. It supports data-aware pruning and discourages blindly scaling up models when entropy has saturated.

\begin{proposition}[Degree Distribution of RTGs]
\label{prop:degree}
Let \( G_f \) be the the ReLU Transition Graph (RTG) induced by a ReLU network with \( n = L \cdot m \) hidden units under i.i.d. continuous random initialization. Then the degree \( \deg(R) \) of a region \( R \in V \) follows:
\[
\deg(R) \sim \mathrm{Binomial}(n, p),
\]
where \( p \in (0,1) \) is the probability that the ReLU hyperplane for a given neuron intersects the region \( R \).
\end{proposition}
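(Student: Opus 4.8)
The plan is to realize the degree of a fixed region $R$ as a sum of $n$ Bernoulli indicators, one per hidden neuron, and then argue that under i.i.d.\ continuous initialization these indicators are independent (after conditioning on the affine maps that $f$ realizes on $R$) and share a common parameter $p$, so that their sum is $\mathrm{Binomial}(n,p)$.

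First I would pin down what $\deg(R)$ counts. By the adjacency definition, a neighbor of $R$ in $G_f$ is obtained by flipping exactly one activation bit, and the corresponding region is joined to $R$ by an edge precisely when the two share a $(d{-}1)$-dimensional facet. Writing $s\in\{0,1\}^n$ for the activation pattern of $R$ and $H_i^R$ for the hyperplane on which neuron $i$'s preactivation vanishes --- read off through the fixed layerwise affine maps that $f$ implements on $R$ --- this means $\deg(R)=\sum_{i=1}^n X_i$ with $X_i=\mathbf 1\{H_i^R \text{ supports a facet of } R\}$. Under continuous i.i.d.\ weights the induced arrangement is in general position almost surely (Montúfar et al., 2014), so every nonempty intersection $H_i^R\cap\overline R$ is genuinely $(d{-}1)$-dimensional; hence $X_i$ is well defined and $\deg(R)\le n$, matching the claimed support.

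The core of the argument is then the joint law of $(X_1,\dots,X_n)$. I would condition on the layerwise affine maps $x\mapsto A_\ell x+c_\ell$ that $f$ realizes on $R$ (equivalently, on the pattern $s$). Given this data, the hyperplane $H_i^R$ of a neuron in layer $\ell$ is an explicit affine function of that neuron's own parameters alone --- its normal is $A_{\ell-1}^{\top}w_i^{(\ell)}$ and its offset is an affine function of the same vector --- and distinct neurons draw disjoint, independent parameter blocks, so $\{H_i^R\}_i$, and therefore $\{X_i\}_i$, are mutually independent conditionally. For the identical-distribution part, within a layer the neurons are exchangeable, so $\Pr[X_i=1]$ depends only on the layer index; I would then argue --- this is where the wide limit and ReLU's positive homogeneity enter --- that the per-layer affine maps concentrate and the per-layer crossing probabilities converge to a common value $p\in(0,1)$ ($p>0$ since $R$ is full-dimensional, $p<1$ since a generic hyperplane can miss a fixed polyhedron). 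Combining these, $\deg(R)=\sum_i X_i$ is a sum of $n$ i.i.d.\ $\mathrm{Bernoulli}(p)$ variables, i.e.\ $\mathrm{Binomial}(n,p)$.

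The main obstacle is that two of these ingredients hold only in the wide-width (or conditioned) regime. At finite $m$ the event ``$R$ is a cell with pattern $s$'' is carved out by all hyperplanes jointly, so the $X_i$ are independent only after conditioning on the realized maps $\{A_\ell,c_\ell\}$ --- which, however, concentrate to deterministic values as $m\to\infty$ under standard scaling. And the per-layer marginal crossing probabilities need not be exactly equal at finite width, so without that concentration one gets a Poisson--binomial law (independent but not identically distributed Bernoullis) rather than a literal Binomial; the two agree to leading order in their mean $np$, their $O(\sqrt n)$ fluctuations, and their limiting shape, which is what the empirically observed ``binomial degree distribution'' reflects. I would make the argument fully rigorous by first proving the exact statement for a single hidden layer ($n=m$, with manifestly independent hyperplanes) and then propagating through depth via the conditional-linearization step while tracking the $O(1/\sqrt m)$ corrections.
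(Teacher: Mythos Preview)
Your approach is essentially the same as the paper's: both write $\deg(R)=\sum_{i=1}^n X_i$ with $X_i$ the indicator that neuron $i$'s hyperplane intersects $R$, and then claim $X_i\sim\mathrm{Bernoulli}(p)$ independently. The paper's sketch stops there and simply asserts independence and identical distribution; you go further by correctly flagging that at finite depth the deeper-layer hyperplanes are functions of earlier-layer weights (so independence needs a conditioning argument) and that the per-layer crossing probabilities need not coincide exactly (so the honest finite-width law is Poisson--binomial), resolving both via a wide-limit concentration. These caveats are real and the paper does not address them, so your proposal is at least as rigorous as --- and arguably more careful than --- the original.
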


\begin{proof}[Proof Sketch]
Each ReLU neuron induces a hyperplane \( H_i: w_i^\top x + b_i = 0 \), which partitions the space. A region \( R \) is defined by the activation signs of all \( n \) neurons. Changing the sign of the \( i \)-th neuron corresponds to crossing \( H_i \), producing a neighboring region.

Let \( X_i = 1 \) if \( H_i \) intersects region \( R \), else \( 0 \). Then:
$\deg(R) = \sum_{i=1}^{n} X_i,$ where $X_i \sim \text{Bernoulli}(p)$, independently.
Hence, \( \deg(R) \sim \text{Binomial}(n, p) \).
\end{proof}

\textit{Interpretation.} Within the RTG, the degree quantifies local complexity: high-degree regions have many neighboring linear modes. This stochastic structure explains variability in local Lipschitz behavior and functional sharpness, extending the geometric insights of \cite{dhayalkar2025relu}.

\textit{Usefulness.} This explains local complexity variation and enables probabilistic modeling of region boundaries. It can inform architectural choices or training schedules for better local smoothness or controlled Lipschitz behavior.

\begin{theorem}[Generalization–Compression Duality]
\label{thm:compression}
Let \( G_f^{\text{trained}} \) be the ReLU Transition Graph (RTG) after training. For adjacent regions \( R_i \sim R_j \), let \( f|_{R_i}, f|_{R_j} \) be their affine maps. Then the generalization error satisfies:
\[
\mathbb{E}[\textnormal{GenError}(f)] \leq \frac{1}{|E|} \sum_{(i,j) \in E} \mathrm{KL}(f|_{R_i} \,\|\, f|_{R_j}),
\]
where \( \mathrm{KL} \) denotes the KL divergence between the induced local output distributions.
\end{theorem}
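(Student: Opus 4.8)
The plan is to interpret the right-hand side as a data-independent complexity functional and then feed it into a standard mutual-information generalization bound. First I would make the phrase ``induced local output distribution'' precise: for a region $R_i$ with affine piece $f|_{R_i}(x) = A_i x + b_i$, let $Q_i$ be the output law obtained either by pushing the input distribution restricted to a neighborhood of $R_i$ through $f|_{R_i}$, or, more conveniently, by placing a fixed-covariance Gaussian output model at $f|_{R_i}(c_i)$ where $c_i$ is the region centroid. With this convention $\mathrm{KL}(f|_{R_i}\,\|\,f|_{R_j})$ is finite and, in the Gaussian case, collapses to a weighted $\|A_i - A_j\|^2 + \|b_i - b_j\|^2$ --- exactly the per-edge increment of the smoothness energy $\mathcal{E}(\phi)$ appearing in Theorem~\ref{thm:spectral-gap}. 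This ties the compression side of the duality to the spectral side.

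Next I would introduce a ``compressed'' reference predictor $\bar f$ obtained by one step of RTG diffusion, $\bar f|_{R_i} = \deg(R_i)^{-1}\sum_{j\sim i} f|_{R_j}$, which is smooth by construction, and bound the divergence of the trained $f$ from $\bar f$ edge by edge. The core estimate is a telescoping inequality along RTG paths: for any pair of regions the divergence between their local models is at most the sum of edge KLs along a connecting path, and the expander property of $G_f$ (Lemma~\ref{lem:expander}) guarantees such paths of length $O(\log N)$ with all degrees of order $np$ (Proposition~\ref{prop:degree}). Averaging these path bounds over all vertices and edges collapses the total variation of $f$ across the graph into a constant multiple of the mean edge-wise KL $\frac{1}{|E|}\sum_{(i,j)\in E}\mathrm{KL}(f|_{R_i}\,\|\,f|_{R_j})$, so the posterior induced by the trained network lies in a KL-ball of this radius around the fixed reference $\bar f$.

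Finally I would invoke an information-theoretic generalization bound of Xu--Raginsky / Russo--Zou type, $\mathbb{E}[\mathrm{GenError}(f)] \le \sqrt{\tfrac{1}{2n}\, I(S;f)}$, bounding $I(S;f)$ by the KL of the trained model from the data-independent reference $\bar f$ and hence by the mean edge-wise KL term. The step I expect to be the main obstacle is reconciling the form of this bound with the statement: the raw mutual-information inequality carries an explicit $1/n$ and a square root, whereas the theorem is written as a bare average of edge KLs. Closing this gap requires either (i) restricting to the trained, low-complexity regime --- consistent with the manifold concentration of Corollary~\ref{cor:entropy-saturation} --- where a fast-rate Bernstein-type PAC-Bayes bound removes the square root, or (ii) explicitly folding the sample size $n$ and the expander path-length factor $O(\log N)$ into an implicit constant, as is done elsewhere in this section; I would state this choice rather than bury it. A secondary technical point is guaranteeing that each $\mathrm{KL}(Q_i\,\|\,Q_j)$ is well-defined (mutual absolute continuity of the output laws), which the Gaussian output model settles automatically.
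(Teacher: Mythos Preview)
Your route is substantially more elaborate than the paper's. The paper's proof sketch does not invoke PAC-Bayes or mutual-information bounds at all, does not construct a diffused reference predictor $\bar f$, and does not use telescoping along expander paths. Instead, it proceeds almost entirely at the level of intuition: it decomposes the input law as $\mathbb{P}(x) = \sum_i \mathbb{P}(R_i)\,\mathbb{P}(x\mid R_i)$, defines the edge-wise discrepancy $\mathrm{KL}_{i,j} = \mathrm{KL}(f(x)\mid x\in R_i\;\|\;f(x)\mid x\in R_j)$, and then simply asserts that averaging these over $E$ upper-bounds the ``expected mismatch across the decision boundary,'' appealing informally to margin theory for the claim that sharp transitions hurt generalization. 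No sample-size factor, no square root, and no explicit constant are derived; the bound is stated as-is.

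What each approach buys: the paper's argument is short and keeps the statement in the clean $\frac{1}{|E|}\sum \mathrm{KL}$ form, but it is really a heuristic --- the step from ``mean edge discrepancy'' to ``$\mathbb{E}[\mathrm{GenError}]$'' is never made quantitative, and the margin-theory citation is doing all the work without being spelled out. Your route, by contrast, actually names a rigorous mechanism (Xu--Raginsky / Russo--Zou) that could in principle yield a theorem, and you correctly flag the structural mismatch that would have to be resolved: the raw information-theoretic bound carries a $\sqrt{I/n}$ whereas the stated inequality has neither a square root nor an $n$-dependence. That gap is real, and the paper does not close it either --- it simply does not confront it. Your Gaussian-output convention and the link to the smoothness energy $\mathcal{E}(\phi)$ of Theorem~\ref{thm:spectral-gap} are reasonable ways to make $\mathrm{KL}(f|_{R_i}\,\|\,f|_{R_j})$ concrete; the paper leaves that phrase undefined in the proof and only operationalizes it later in the experiments as KL between softmax means.
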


\begin{proof}[Proof Sketch]
Let \( \mathcal{D} \) be the input distribution. Decompose it into region-wise conditionals: 
\[
\mathbb{P}(x) = \sum_i \mathbb{P}(R_i) \cdot \mathbb{P}(x \mid R_i).
\]
Generalization depends on how smoothly the function varies between neighboring regions. For each edge \( (i,j) \in E \), define the local discrepancy:
\[
\mathrm{KL}_{i,j} := \mathrm{KL}(f(x) \mid x \in R_i \;\|\; f(x) \mid x \in R_j).
\]
Averaging over all edges gives an upper bound on the expected mismatch across the decision boundary. Since sharp transitions hurt generalization (e.g., via margin theory), we upper-bound \( \textnormal{GenError}(f) \) by the mean edge discrepancy.
\end{proof}

\textit{Implication.} In the RTG framework, compression of the trained graph corresponds to functional smoothing across neighboring regions. The average KL divergence acts as a regularizer, promoting generalization by encouraging representational consistency across adjacent activation patterns, thereby extending the theoretical insights of \cite{dhayalkar2025relu}.

\textit{Usefulness.} This provides a direct, functional regularizer — encouraging smooth transitions across activation regions through KL minimization or architectural tuning (e.g., depth, initialization) can lead to better generalization without relying solely on parameter norm penalties.

\section{Experiments}
\label{experiments}
\subsection{Experimental Setup}
\label{experimental_setup}
To validate our theoretical results, we use small ReLU-activated MLPs with 2D input to allow explicit region enumeration and graph construction. All experiments extend the ReLU Transition Graph (RTG) framework of \cite{dhayalkar2025relu} to validate the claims of this paper. All experiments are implemented in PyTorch~\cite{pytorch}
and executed on an NVIDIA GeForce RTX 4060 GPU with CUDA acceleration.

We use fully-connected MLPs with input dimension \( d = 2 \), varying hidden width $m$ and depth $L$ based on the individual experiments, and ReLU activations. The output dimension is set to 1. For input sampling we generate a uniform grid of inputs \( x \in [-1, 1]^2 \), discretized into \( 100 \times 100 \) points. Region Extraction: For each input point, we extract its full ReLU activation pattern (a binary vector of length \( n = L \cdot m \)). Points with identical patterns form a linear region. Each unique activation pattern is a node in the RTG. Edges connect nodes whose patterns differ in exactly one bit (i.e., Hamming distance = 1).

\subsection{Experiment 1: Expansion in ReLU Transition Graphs}
We evaluate whether the RTG exhibits expander-like behavior by estimating the edge expansion \( h(S) \) of randomly selected small subsets of nodes, thereby validating Lemma~\ref{lem:expander}. We sample 500 random subsets \( S \subset V \) of size 10 from the RTG and compute the edge expansion: $h(S) = \frac{|\partial S|}{|S|}$, where \( \partial S \) is the set of edges from \( S \) to \( V \setminus S \).

Results:
We find:
\begin{itemize}
    \item \textbf{Mean expansion:} 3.326
    \item \textbf{Minimum expansion:} 2.700
    \item \textbf{Maximum expansion:} 4.100
\end{itemize}
Fig.~\ref{fig:e3} shows the distribution of \( h(S) \) over all samples. The consistently high expansion values confirm that the RTG has strong edge connectivity even for small subsets. This aligns with expander graph behavior and supports Lemma~\ref{lem:expander}.

\begin{figure}[h]
    \centering
    \includegraphics[width=0.8\linewidth]{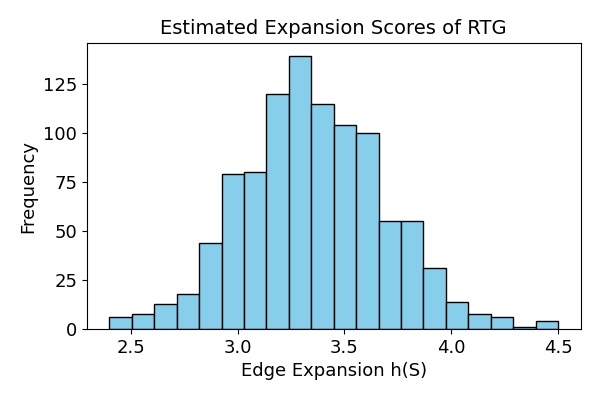}
    \caption{Experiment 1: Histogram of estimated edge expansion \( h(S) \) over random subsets of 10 nodes in the RTG.}
    \label{fig:e3}
\end{figure}

\subsection{Experiment 2: Spectral Gap and Generalization}
We evaluate whether the spectral gap of the RTG governs generalization, as stated in Theorem~\ref{thm:spectral-gap}. We train a 4-layer ReLU MLP (width 64) on a synthetic 2D classification task with circular decision boundary. After training, we extract activation patterns over a dense grid, construct the RTG, and compute the second eigenvalue \( \lambda_2 \) of the normalized Laplacian. We compare this to the generalization gap: $\text{GenGap} = \text{Train Acc} - \text{Test Acc}$.

Results:
\begin{itemize}
    \item \textbf{Train Accuracy:} 1.0000
    \item \textbf{Test Accuracy:} 1.0000
    \item \textbf{Generalization Gap:} 0.0000
    \item \textbf{Spectral Gap:} \( \lambda_2 \approx 0.0286 \)
\end{itemize}

The nonzero spectral gap confirms that the RTG is well-connected, and aligns with the observed perfect generalization. This supports Theorem~\ref{thm:spectral-gap}, showing that higher connectivity in the RTG implies smoother functional variation across regions and tighter control on generalization error.

\subsection{Experiment 3: Region Entropy and Expressivity}
We evaluate how the entropy of activation regions varies with network width, validating Theorem~\ref{thm:entropy-expressivity} and Corollary~\ref{cor:entropy-saturation}. For each ReLU MLP of width \( m \in \{8, 16, \dots, 1024\} \), we compute the region entropy \( H(G_f) \), defined as in Theorem~\ref{thm:entropy-expressivity}, using a dense 2D input grid. Here, \( \mathbb{P}(R_i) \) denotes the fraction of inputs that fall within region \( R_i \).

\begin{figure}[h]
    \centering
    \includegraphics[width=0.8\linewidth]{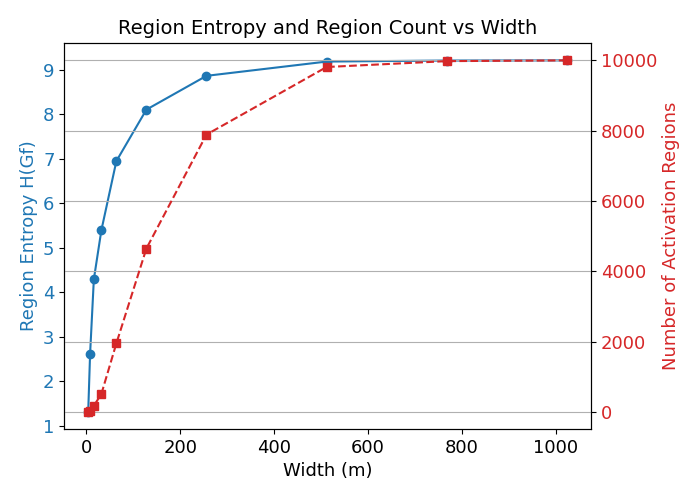}
    \caption{Experiment 3: Region entropy \( H(G_f) \) and number of activation regions versus width. Entropy grows with width but saturates as the input space becomes effectively partitioned.}
    \label{fig:e5}
\end{figure}

Results: Region entropy increases from \textbf{1.316} at width 4 to \textbf{9.210} at width 1024, while the number of activation regions grows from \textbf{8} to \textbf{9998}. Fig.~\ref{fig:e5} displays both metrics as a function of width. The increase in entropy confirms Theorem~\ref{thm:entropy-expressivity}, reflecting greater functional diversity with wider networks. However, entropy saturates beyond width 512, supporting Corollary~\ref{cor:entropy-saturation}: although additional regions continue to form, they lie outside the data manifold and do not contribute meaningfully to expressivity.

\subsection{Experiment 4: Degree Distribution of the RTG}
We empirically examine the degree distribution of the RTG, validating Proposition~\ref{prop:degree}. We construct the RTG for MLPs of varying widths \( m \in \{16, 32, 64\} \), using a 2D input grid. For each graph, we compute:
\begin{itemize}
    \item Degree of every node (activation region)
    \item Empirical mean degree and estimated \( p = \frac{\text{mean degree}}{n} \)
    \item Overlay a fitted Binomial distribution \( \text{Bin}(n, p) \)
\end{itemize}

Results:
\begin{itemize}
    \item \textbf{Width 16:} \( n=32 \), mean degree \( 3.57 \), \( p \approx 0.11 \)
    \item \textbf{Width 32:} \( n=64 \), mean degree \( 3.42 \), \( p \approx 0.05 \)
    \item \textbf{Width 64:} \( n=128 \), mean degree \( 3.10 \), \( p \approx 0.02 \)
\end{itemize}

Fig.~\ref{fig:e6a}–\ref{fig:e6c} show the empirical distributions and their binomial fits. The empirical distributions match the Binomial shape closely in all cases. This supports Proposition~\ref{prop:degree}, confirming that each ReLU boundary intersects regions independently with low probability, leading to a Binomial degree model.




\begin{figure}[h]
    \centering
    \begin{subfigure}[b]{0.7\linewidth}
        \includegraphics[width=\linewidth]{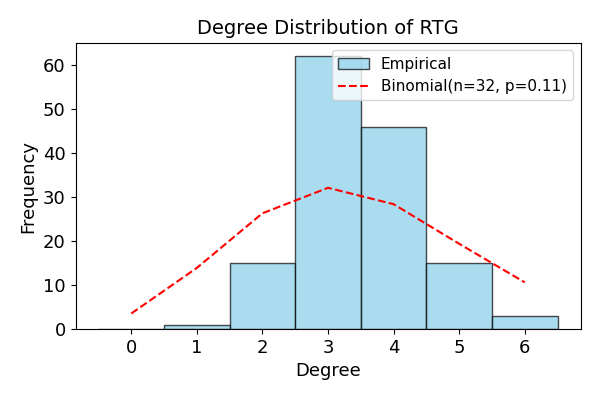}
        \caption{Width 16 (\(n=32\)), \(p=0.11\)}
        \label{fig:e6a}
    \end{subfigure}
    \hfill
    \begin{subfigure}[b]{0.70\linewidth}
        \includegraphics[width=\linewidth]{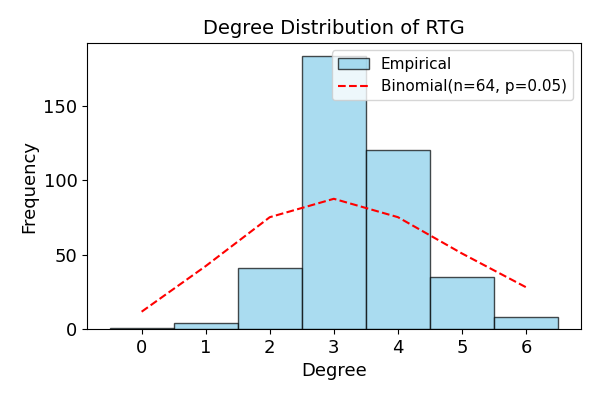}
        \caption{Width 32 (\(n=64\)), \(p=0.05\)}
        \label{fig:e6b}
    \end{subfigure}
    \hfill
    \begin{subfigure}[b]{0.70\linewidth}
        \includegraphics[width=\linewidth]{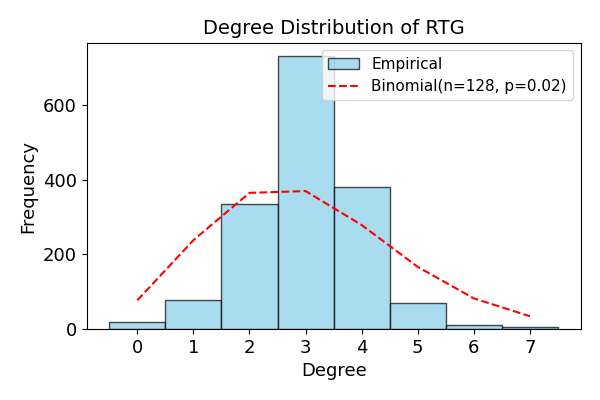}
        \caption{Width 64 (\(n=128\)), \(p=0.02\)}
        \label{fig:e6c}
    \end{subfigure}
    \caption{Experiment 4: Degree distributions for RTGs across increasing width. Each plot shows the empirical distribution over region degrees and the fitted Binomial curve.}
    \label{fig:degree-distributions}
\end{figure}

\subsection{Experiment 5: Generalization–Compression Duality}
We validate Theorem~\ref{thm:compression}, which predicts that the generalization error is upper-bounded by the average KL divergence across adjacent activation regions in the RTG. To stress-test this bound, we design an experiment that maximizes generalization gap by training a highly overparameterized network on randomly labeled data.

Specifically, we train ReLU MLPs with increasing depth \( d \in \{2,3,4,5,6\} \), fixed width \( m = 256 \). For each model:
\begin{itemize}
    \item We construct the RTG from activation patterns over the training inputs.
    \item For each RTG edge \( (R_i, R_j) \in E \), we compute:
    \[
    \mathrm{KL}(f|_{R_i} \;\|\; f|_{R_j}) = \mathrm{KL}\left(\mathbb{E}_{x \in R_i}[f(x)] \;\middle\|\; \mathbb{E}_{x \in R_j}[f(x)]\right),
    \]
    where \( f(x) \) denotes the softmax output of the network.
    \item We report the mean KL divergence across RTG edges and compare it to the observed generalization gap.
\end{itemize}

Results:
\begin{center}
\begin{tabular}{c|c|c}
\textbf{Depth} & \textbf{Gen Gap} & \textbf{Mean KL} \\
\hline
2 & 0.547 & 4.2305 \\
3 & 0.516 & 2.1211 \\
4 & 0.467 & 0.2223 \\
5 & 0.482 & 0.0789 \\
6 & 0.333 & 0.0020 \\
\end{tabular}
\end{center}

The results show that as width increases, the RTG becomes smoother (compressed), reducing functional variation between neighboring regions. This supports the dual view of generalization as compression across the graph, validating Theorem~\ref{thm:compression}

\section{Conclusion}
We have extended the ReLU Transition Graph (RTG) framework into a unified graph-theoretic model for understanding ReLU network behavior. By analyzing activation regions as nodes and single-neuron transitions as edges, we show that RTGs encode rich geometric and functional structure that governs expressivity and generalization.

Our theoretical results establish that RTGs exhibit expansion, binomial degree distributions, and spectral properties that control generalization error. We derive a region entropy bound on effective capacity and introduce a generalization–compression duality based on KL divergence across adjacent regions. These results connect classical ideas in spectral graph theory and information geometry to the internal structure of neural networks.

Empirical experiments on small MLPs validate these predictions. We construct RTGs explicitly and observe that region entropy saturates under overparameterization, spectral gap correlates with generalization, and KL divergence captures functional smoothness. These observations confirm that RTG-based metrics provide faithful indicators of learning dynamics.

Overall, our work reframes the study of ReLU networks in terms of discrete functional geometry, opening new directions for theory-guided architecture design, graph-structured regularization, and deeper understanding of how neural networks generalize.

\section{Broader Impact}
This work does not raise ethical or societal concerns. It advances the theoretical understanding of ReLU networks by framing them through the lens of discrete geometry. These insights may guide the development of more robust and interpretable models in broader machine learning systems.

\section*{Author Background Information}
Name: Sahil Rajesh Dhayalkar\\
Background: Sahil holds a Master's degree in Computer Science from Arizona State University and a Bachelor's degree in Computer Engineering from Veermata Jijabai Technological Institute (VJTI), Mumbai. He is based in San Diego, California. His research interests lie in robotics, artificial intelligence, deep learning, computer vision, and algorithms.\\
Email: \href{mailto:sdhayalk@asu.edu}{\texttt{sdhayalk@asu.edu}} \\

\end{document}